\newtheorem{theorem}{\bf Theorem}
\newtheorem{proof}{Proof}
\begin{document}

\title{Heterogeneous Learning Rate Scheduling for Neural Architecture Search on Long-Tailed Datasets}

\author{\IEEEauthorblockN{Chenxia Tang}
\IEEEauthorblockA{\textit{Department of Computer Science} \\
\textit{University of Technology and Science of China}\\
 \\
tomorrowdawn@mail.ustc.edu.cn}
}

\maketitle

\begin{abstract}
In this paper, we attempt to address the challenge of applying Neural Architecture Search (NAS) algorithms, specifically the Differentiable Architecture Search (DARTS), to long-tailed datasets where class distribution is highly imbalanced. We observe that traditional re-sampling and re-weighting techniques, which are effective in standard classification tasks, lead to performance degradation when combined with DARTS. To mitigate this, we propose a novel adaptive learning rate scheduling strategy tailored for the architecture parameters of DARTS when integrated with the Bilateral Branch Network (BBN) for handling imbalanced datasets. Our approach dynamically adjusts the learning rate of the architecture parameters based on the training epoch, preventing the disruption of well-trained representations in the later stages of training. Additionally, we explore the impact of branch mixing factors on the algorithm's performance. Through extensive experiments on the CIFAR-10 dataset with an artificially induced long-tailed distribution, we demonstrate that our method achieves comparable accuracy to using DARTS alone. And the experiment results suggest that re-sampling methods inherently harm the performance of the DARTS algorithm. Our findings highlight the importance of careful data augment when applying DNAS to imbalanced learning scenarios.
\end{abstract}

\begin{IEEEkeywords}
DARTS, Vision Classification, Neural Architecture Search, Imbalanced Learning
\end{IEEEkeywords}

\section{Introduction}

In recent years, with the continuous development of deep learning, factors limiting model performance have received increasing attention. Among these factors, model architecture plays a significant role. Most models rely on expert-designed structures, leading to lengthy experimental cycles and a heavy dependence on experience and grid search. To automate this process, numerous neural architecture search (NAS) techniques have been proposed\cite{zoph2016nas, liu2018darts, xie2018snas, liang2019darts+, chen2021progressive}, roughly categorized as reinforcement learning-based NAS, evolutionary algorithm-based NAS, and gradient-based NAS. Among these techniques, DARTS (Differentiable Architecture Search)\cite{liu2018darts} has stood out due to its extremely low training cost and comparable performance. During the training of DARTS models, balanced datasets are typically used, where the number of samples for each image class is similar. However, when applying DARTS to real-world production environments, we often encounter long-tailed distributions, where a small number of tail classes have very limited data, resulting in an extremely low sample count. We have observed that imbalanced datasets can severely impair the performance of DARTS, shown in Table \ref{tab:prelim-exp}, making this issue a pressing concern.

Several well-established solutions have been proposed to address the training problem on long-tailed distributions. These techniques can be broadly categorized into three classes:

\begin{itemize}
    \item \textbf{Re-sampling}: This involves modifying the distribution of the training data to make it more balanced. The two main approaches are oversampling the minority class, such as through duplicating or synthesizing new examples \cite{chawla2002smote}, and undersampling the majority class to reduce its prevalence. Re-sampling techniques aim to expose the model to a more balanced distribution during training.
    
    \item \textbf{Re-weighting}: Instead of altering the training data, re-weighting methods address imbalance by modifying the loss function. The idea is to assign higher weights to examples from the minority class, effectively making their contribution to the loss more significant \cite{lin2017focal}. This encourages the model to focus more on learning the minority class. Cost-sensitive learning is a popular re-weighting approach.
    
    \item \textbf{Mix-up}: Mix-up is a data augmentation technique that involves creating new training examples by linearly interpolating between existing ones \cite{zhang2017mixup}. For imbalanced learning, mix-up can be used to generate new examples of the minority class by interpolating between minority examples and majority examples. This can help expand the minority class and make its distribution more diverse.
\end{itemize}

These techniques can be used independently or in combination to tackle imbalanced learning problems. The choice of method depends on factors such as the severity of the imbalance, the size of the dataset, and the specific domain and task. Properly addressing imbalance is crucial for developing models that perform well across all classes.

Cumulative learning\cite{zhou2020bbn} is a special case of re-sampling, where two branch networks are combined, one of which has its input transformed through re-sampling. Networks like Bilateral-Branch Network(BBN)\cite{zhou2020bbn} have achieved surprisingly good results using this approach. However, our experiments indicate that simply combining DARTS with this strategy is insufficient to improve performance and can even lead to a significant decline. To address this, we propose a series of improvements based on our observation that gradually adjusting the learning rate of architecture parameters according to the epoch can prevent the re-balanced data from disrupting the well-trained representations in the later stages of the experiment. Furthermore, we have noticed that the mixing factor in cumulative learning is usually set to 0.5 during the inference process, but this is not the case in the DARTS algorithm. This suggests that the final network performance largely benefits from the classification head, and treating the two branches as equal is not feasible.

In this paper, we thoroughly investigate the behavior of the DARTS algorithm under the cumulative learning strategy. Specifically, we employ adaptive learning rates for different architecture parameters and explore the impact of branch mixing factors on the algorithm's results. We point out that adjusting the branch mixing factor and learning rate for different branches' classification heads is equivalent, but it has no effect on the shared weights, i.e., the backbone part. This leads to the data from the re-balanced branch disrupting the well-trained backbone in the later stages of training. Since the DARTS algorithm requires simultaneous optimization of weights and architecture, this introduces significant instability, undermining the training process. Furthermore, in the experimental section, we also discuss the influence of other similar strategies, such as freezing the backbone or changing mixing factor evolution strategies. The experimental results strongly demonstrate that, compared to directly using cumulative learning or re-sampling, we achieve substantial improvements. However, the final effect is only slightly better than using DARTS directly. We conclude that re-sampling methods inherently harm the performance of the DARTS algorithm.

The paper is structured as follows: Section 2 provides a comprehensive review of related works, focusing on Neural Architecture Search (NAS) and long-tailed learning. Section 3 details our proposed adaptive learning rate scheduling strategy, explaining its rationale and implementation. Section 4 outlines the experimental setup, including the datasets used, the methodology for evaluating the proposed strategies, and preliminary results. This section also includes a discussion on the implications of our findings and comparisons with existing methods. Finally, Section 5 concludes the paper, summarizing the key contributions and suggesting future research directions.

\section{Related Work}

\subsection{Differentiable Neural Architecture Search} Neural architecture search (NAS) aims to automate the design of neural network architectures. Early NAS methods relied on discrete search spaces and non-differentiable optimization techniques such as reinforcement learning \cite{zoph2016nas} and evolutionary algorithms \cite{krizhevsky2009learning}. However, these approaches are computationally expensive and require a large number of architecture evaluations. To address these limitations, differentiable NAS (DNAS) methods have been proposed, which enable the search process to be optimized using gradient-based techniques \cite{liu2018darts}. To the best of our knowledge, the DARTS algorithm still lacks theoretical guarantees, but numerous studies have verified its effectiveness with strong experimental results\cite{liang2019darts+, xie2018snas}. Compared to traditional RL or evolutionary methods, DARTS has a significant advantage in training cost. Most RL or evolutionary methods require thousands of GPU days\cite{zoph2016nas}, while DARTS only needs $<2$ days. 

One prominent DNAS approach is DARTS (Differentiable Architecture Search) \cite{liu2018darts}, which introduces a continuous relaxation of the discrete architecture space, allowing for efficient search using gradient descent. DARTS has been widely adopted and has inspired many subsequent DNAS methods \cite{xu2019pc, chen2021progressive}. PC-DARTS \cite{xu2019pc} improves upon DARTS by introducing partial channel connections, reducing the memory footprint and enabling the search of larger architectures. Progressive DARTS \cite{chen2021progressive} addresses the instability issues of DARTS by progressively increasing the depth of the searched architectures during the optimization process.

Other notable DNAS methods include NAO (Neural Architecture Optimization) \cite{luo2018neural}, which uses an auto-encoder to map architectures to a continuous space, and SNAS (Stochastic Neural Architecture Search) \cite{xie2018snas}, which introduces a stochastic sampling strategy to improve the search efficiency. More recently, DARTS+ \cite{liang2019darts+} and GDAS (Gradient-based search using Differentiable Architecture Sampling) \cite{dong2019searching} have been proposed to further enhance the stability and efficiency of DNAS.

Despite the advancements in DNAS, challenges such as the gap between search and evaluation performance, the computational cost of search, and the interpretability of the searched architectures remain active areas of research \cite{zela2019understanding, li2020random}.

\subsection{Vision Classification} Image classification is a fundamental task in computer vision, aiming to assign predefined labels to input images. Deep convolutional neural networks (CNNs) have achieved remarkable success in this task, with architectures such as AlexNet \cite{krizhevsky2012imagenet}, VGGNet \cite{simonyan2014very}, and ResNet \cite{he2016deep} setting new benchmarks on large-scale datasets like ImageNet \cite{deng2009imagenet}.

Recent advancements in vision classification have focused on designing more efficient and powerful architectures. MobileNets \cite{howard2017mobilenets, sandler2018mobilenetv2} introduce depthwise separable convolutions to reduce the computational cost while maintaining high accuracy. EfficientNets \cite{tan2019efficientnet} propose a systematic approach to scale up CNNs in terms of depth, width, and resolution, achieving state-of-the-art performance with improved efficiency. Vision Transformers (ViT) \cite{dosovitskiy2020image} adapt the self-attention mechanism from natural language processing to vision tasks, showing promising results on image classification benchmarks.

Other notable contributions to vision classification include the use of attention mechanisms \cite{hu2018squeeze, woo2018cbam, zhang2022resnest}, neural architecture search \cite{zoph2018learning, liu2018darts}, and knowledge distillation \cite{hinton2015distilling, tian2019contrastive}. Attention mechanisms help the model focus on relevant features, while NAS automates the design of architectures tailored for specific tasks. Knowledge distillation transfers knowledge from larger, more complex models to smaller, more efficient ones, enabling deployment on resource-constrained devices.

\subsection{Imbalanced Learning} Imbalanced learning deals with classification tasks where the number of samples per class is not evenly distributed. This is a common challenge in real-world applications, such as medical diagnosis, fraud detection, and object detection, where some classes may have significantly fewer samples than others. Conventional machine learning algorithms often struggle in imbalanced scenarios, as they tend to be biased towards the majority class \cite{branco2016survey}.

To address this issue, various techniques have been proposed. Resampling methods, such as oversampling the minority class (e.g., SMOTE \cite{chawla2002smote}) or undersampling the majority class \cite{liu2008exploratory}, aim to balance the class distribution. Cost-sensitive learning assigns higher misclassification costs to the minority class, encouraging the model to focus on these samples. Ensemble methods, like SMOTEBoost \cite{chawla2003smoteboost} and RUSBoost \cite{seiffert2009rusboost}, combine resampling techniques with boosting algorithms to improve performance on imbalanced datasets.

In the context of deep learning, class-balanced loss functions \cite{cui2019class} and focal loss \cite{lin2017focal} have been proposed to address imbalanced learning. Class-balanced loss functions re-weight the loss based on the inverse class frequency, while focal loss down-weights the contribution of easy examples and focuses on hard, misclassified samples. Other approaches include meta-learning \cite{ren2018learning} and transfer learning \cite{wang2017balanced} to leverage information from related tasks or domains.

Recent research has also explored the use of generative models, such as GANs \cite{mariani2018bagan} and VAEs \cite{huang2022ada}, to synthesize additional samples for the minority class. These generated samples can help alleviate the class imbalance and improve the model's generalization ability.

Despite the progress made in imbalanced learning, it remains an ongoing challenge, particularly in the presence of extreme imbalances, noisy labels, and complex data distributions \cite{branco2016survey, johnson2019survey}. Further research is needed to develop more robust and adaptive methods for handling imbalanced datasets in various domains.


\section{Preliminary}

\subsection{CNN}

Convolutional Neural Networks (CNNs) are a class of deep learning models designed to process grid-like data, such as images. CNNs have achieved state-of-the-art performance in various computer vision tasks, including image classification, object detection, and semantic segmentation.

\subsubsection{Convolutional Layers}

The core building block of a CNN is the convolutional layer. Given an input tensor $\mathbf{X} \in \mathbb{R}^{H \times W \times D}$, where $H$, $W$, and $D$ represent the height, width, and depth (number of channels) of the input, respectively, a convolutional layer applies a set of learnable filters $\mathbf{W} \in \mathbb{R}^{K \times K \times D \times F}$ to the input. Here, $K$ represents the spatial dimensions of the filters, and $F$ is the number of filters. The output of a convolutional layer is computed as:

\begin{equation} \mathbf{Y}_{i,j,f} = \sum_{k_1=0}^{K-1} \sum_{k_2=0}^{K-1} \sum_{d=0}^{D-1} \mathbf{W}_{k_1,k_2,d,f} \cdot \mathbf{X}_{i+k_1,j+k_2,d} + b_f \end{equation}

where $\mathbf{Y} \in \mathbb{R}^{H' \times W' \times F}$ is the output tensor, and $b_f$ is a learnable bias term for each filter.

\subsubsection{Pooling Layers}

Pooling layers are used to downsample the spatial dimensions of the feature maps, reducing the computational complexity and providing translation invariance. The most common pooling operations are max pooling and average pooling. Given an input tensor $\mathbf{X} \in \mathbb{R}^{H \times W \times D}$ and a pooling window of size $P \times P$, the output of a max pooling layer is:

\begin{equation} \mathbf{Y}_{i,j,d} = \max_{p_1} \max_{p_2} \mathbf{X}_{iP+p_1,jP+p_2,d} \end{equation}

\subsubsection{Residual Connections}

Residual connections, introduced in the ResNet architecture \cite{he2016deep}, are a technique to mitigate the vanishing gradient problem and enable the training of deeper neural networks. In a residual block, the input $\mathbf{X}$ is added to the output of one or more convolutional layers $\mathcal{F}(\mathbf{X})$, forming a skip connection:

\begin{equation} \mathbf{Y} = \mathcal{F}(\mathbf{X}) + \mathbf{X} \end{equation}

This allows the gradient to flow directly through the skip connection, facilitating the training of deep networks. If the dimensions of $\mathbf{X}$ and $\mathcal{F}(\mathbf{X})$ do not match, a linear projection $\mathbf{W}_s$ can be used to align the dimensions:

\begin{equation} \mathbf{Y} = \mathcal{F}(\mathbf{X}) + \mathbf{W}_s\mathbf{X} \end{equation}

Residual connections have become a fundamental component of many state-of-the-art CNN architectures, enabling the successful training of networks with hundreds or even thousands of layers.

The three types of operations mentioned above constitute our operations. The specific operations will be detailed in section [ref].

\subsection{DARTS}

DARTS treats neural networks as stacks of $L$ layers of cells. Each cell is composed of $N$ nodes, forming a directed acyclic graph (DAG). Each cell receives the outputs from the two preceding cells as its own inputs, corresponding to the first two nodes (also called input nodes) within the cell. Inside the cell, each node receives the outputs from its preceding nodes and produces an output. Each edge between nodes has a specific transformation operation, denoted as $o\in \mathcal{O}$. Thus, the output of each node can be expressed as:

\begin{equation}
    x^{(j)} = \sum_{i<j} o^{(i,j)}(x^{(i)})
\end{equation}

Unlike ENAS\cite{krizhevsky2009learning}, DARTS does not directly search in the discrete operation space. Instead, it relaxes the problem into a continuous optimization problem. DARTS introduces an architecture parameter $\alpha\in R^{|\mathcal{O}|}$, which is then converted into weights using softmax to assign weights to each operation:

\begin{equation}
    \label{eq:darts-relaxed}
    \overline{o}^{(i,j)}(x)=\sum_{o\in\mathcal{O}}\text{softmax}(\alpha)_{o}(x)
\end{equation}

A neural network can be denoted as $M(w, \alpha)$, where $w$ is the weight parameter. The objective of traditional NAS methods is to minimize the validation loss. As it involves two parameters, $w$ and $\alpha$, the problem is usually formulated as a joint optimization problem: $\min_{w, \alpha} \mathcal{L}(\alpha, w)$. DARTS reformulates it as a bilevel optimization problem by considering $\alpha$ as the upper level variable and $w$ as the lower level variable:

\begin{align}
    \label{eq:darts-bilevel}
    \min_{\alpha} &\, \mathcal{L}_{\text{val}}(w^*(\alpha), \alpha)\\
    \text{s.t.} &\, w^*(\alpha) = \mathrm{argmin}_{w} \mathcal{L}_{\text{train}}(w, \alpha)
\end{align}

However, it is difficult to solve the bilevel optimization problem directly. Computing the accurate gradient of $\mathcal{L}_{val}$ is prohibitively expensive because of inner optimization. Therefore, DARTS proposed a single-step approximation:

\begin{equation}
\label{eq:darts-update}
    \begin{split}
        &\nabla_{\alpha}\mathcal{L}_{\text{val}}(w^*(\alpha), \alpha) \approx\\
        &\nabla_{\alpha}\mathcal{L}_{\text{val}}(w - \xi \nabla_{w}\mathcal{L}_{\text{train}}(w,\alpha), \alpha)
    \end{split}
\end{equation}

Thus, DARTS can iteratively use the above formulas to update on the training set and validation set to solve the bilevel optimization problem. 

\subsection{Bilateral Branch Network}

\begin{figure*}[ht]
    \centering
    \includegraphics[scale=0.75]{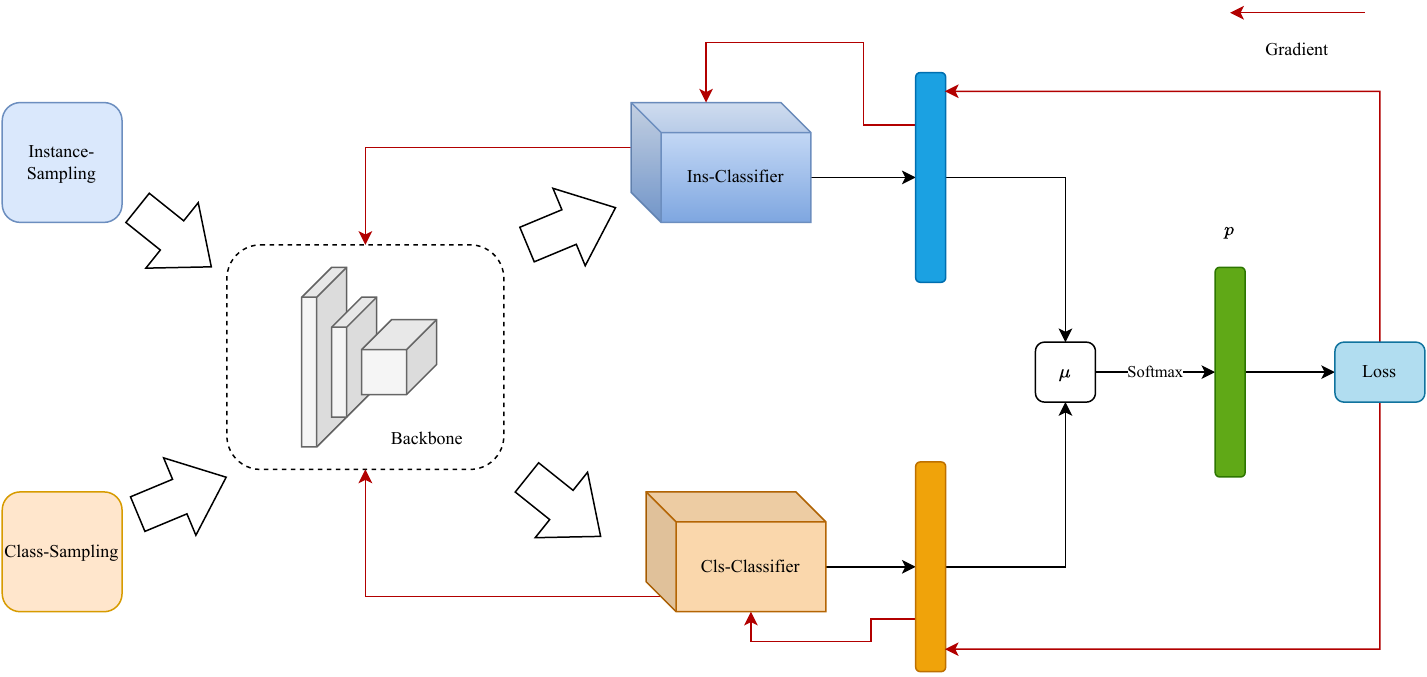}
    \caption{The architecture of BBN. After sampling outputs from two different data sources, they are simultaneously fed into the backbone. The output hidden layer vectors are then fed into two classification heads. Finally, they are mixed according to $\mu$ to obtain the mixed probability vector. The gradient flow is completely a reverse process, resulting in different magnitude of update of each component.}
    \label{fig:bbn}
\end{figure*}

Typically, a classification model is divided into two components: the feature extractor and the classifier. According to \cite{zhou2020bbn}, training directly on the raw data distribution can yield a good feature extractor, but the classifier's performance is usually poor. On the other hand, rebalancing strategies improve classifier performance but result in a less effective feature extractor. The cumulative learning strategy attempts to balance both, enabling the feature extractor to benefit from the raw data while fine-tuning the classifier head with a resampled distribution. \cite{zhou2020bbn} proposed a novel network architecture called BBN.

As shown in Figure \ref{fig:bbn}, a typical cumulative learning strategy consists of a backbone and two classification heads, corresponding to the inputs of instance-sampling and class-sampling, respectively. The logits outputted by these heads are weighted averaged by $\mu$(called \textit{mixing ratio}), and the loss is calculated using the similarly weighted averaged labels. The parameter update formulas for the three components are as follows, where $\xi$ represents the learning rate: 

\begin{equation}
\label{eq:gradient-descend}
    \Theta = \Theta - \xi (\mu\nabla_{\Theta}\mathcal{L}_{ins} + (1-\mu)\nabla_{\Theta}\mathcal{L}_{cls})
\end{equation}

When applying the DARTS algorithm, each component has an architecture parameter in addition to the weight parameters. In the original DARTS algorithm, there are two types of cells: normal cells and reduction cells. Since the classification heads only contain a single residual block, we set the classification heads to include only normal cells. The inputs to the classification heads are the outputs of the last two cells of the backbone. The architecture parameters of each classifier head and backbone are different. 

The update formula can be described as following:
\begin{gather}
    \label{eq:darts-updating-bbn}
    w^*(\alpha) = w-\xi(\mu \nabla_{w}\mathcal{L}_{ \text{ins, train}} + (1-\mu)\nabla_{w}\mathcal{L}_{\text{cls, train}})
\end{gather}
and then apply (\ref{eq:darts-update}).

Though updating rule is same for each component, the input data distribution of each component is quite different.

Concretely, we have an origin dataset with distribution of $\mathcal{D}$, and we use a class-sampling sampler to construct a balanced distribution $\mathcal{D}'$. Let the number of categories be $\mathcal{C}$, and the number of samples for each category be $N_c$. Then, the distribution after sampling can be written as 
\begin{equation}
\label{eq:re-balancing}
    P_{\mathcal{D'}}(x\mid x \text{ in class c}) = \frac{1}{N_c}P_{\mathcal{D}}(x\mid x \text{ in class c})
\end{equation}

For backbone, it receives a mix of $\mathcal{D}$ and $\mathcal{D'}$. For instance-sampling head, it only receives $\mathcal{D}$ and for class-sampling head, it is $\mathcal{D}'$. 

The detailed algorithm is described as Algorithm \ref{alg:bbn-darts}.

\begin{algorithm}[ht]
\caption{BBN with DARTS}
\label{alg:bbn-darts}

\begin{algorithmic}[1]
    \item[] \textbf{Input:} $\mathcal{D} = {(x,y)}$, backbone $f_{bb}(\cdot\mid w_{bb}, \alpha_{bb})$, two classifier heads $f_{ins}(\cdot\mid w_{ins}, \alpha_{ins}), f_{cls}(\cdot\mid w_{cls}, \alpha_{cls})$, maximum epochs $T$.
    
    \STATE Initialize $w, \alpha$ for all components.
    \FOR{t = 1 \textbf{to} T}
    \STATE sample $(x_{ins},y_{ins}) \sim \mathcal{D}$
    \STATE sample $(x_{cls},y_{cls}) \sim \mathcal{D'}$ according to (\ref{eq:re-balancing})
    \STATE $\mathcal{B} = (x_{ins}, x_{cls})$
    \STATE $h = f_{bb}(\mathcal{B})$
    \STATE $o_{ins} = f_{ins}(h)$
    \STATE $o_{cls} = f_{cls}(h)$
    \STATE $\mu = 1-(\frac{t}{T})^2$
    \STATE $\boldsymbol{p} = \mathrm{Softmax}(\mu o_{ins} + (1-\mu) o_{cls})$
    \STATE $\mathcal{L} = \mu\mathbf{CE}(\boldsymbol{p}, y_{ins}) + (1-\mu)\mathbf{CE}(\boldsymbol{p}, y_{cls})$
    \item[] \# \text{\textbf{CE} refers to Cross Entropy}
    \STATE back propagate $\mathcal{L}$ according to (\ref{eq:darts-updating-bbn}) and (\ref{eq:darts-update})
    \ENDFOR
    \RETURN $\Theta_{bb},\Theta_{ins},\Theta_{cls}$
\end{algorithmic}

\end{algorithm}

\subsection{Motivation}


As an initial experiment, we constructed a long-tailed distribution dataset using the standard image classification dataset CIFAR10, where the class with the fewest samples has only 1/100 of the samples of the class with the most samples. We tested three settings: 1. Using DARTS directly. 2. Using re-sampling for rebalancing. 3. Using BBN+DARTS. The results are shown in the Table \ref{tab:prelim-exp}:

{
\linespread{1.5}
\begin{table}[ht]
\caption{Preliminary Experiment on CIFAR-10}
\label{tab:prelim-exp}

\centering
\begin{tabular}{ll}
\hline
Method            & Accuracy \\ \hline
DARTS             & 0.64     \\
DARTS+Re-sampling & 0.61     \\
DARTS+BBN         & 0.52    \\ \hline
\end{tabular}

\end{table}
}

\begin{figure*}
    \centering
    \includegraphics[scale=0.7]{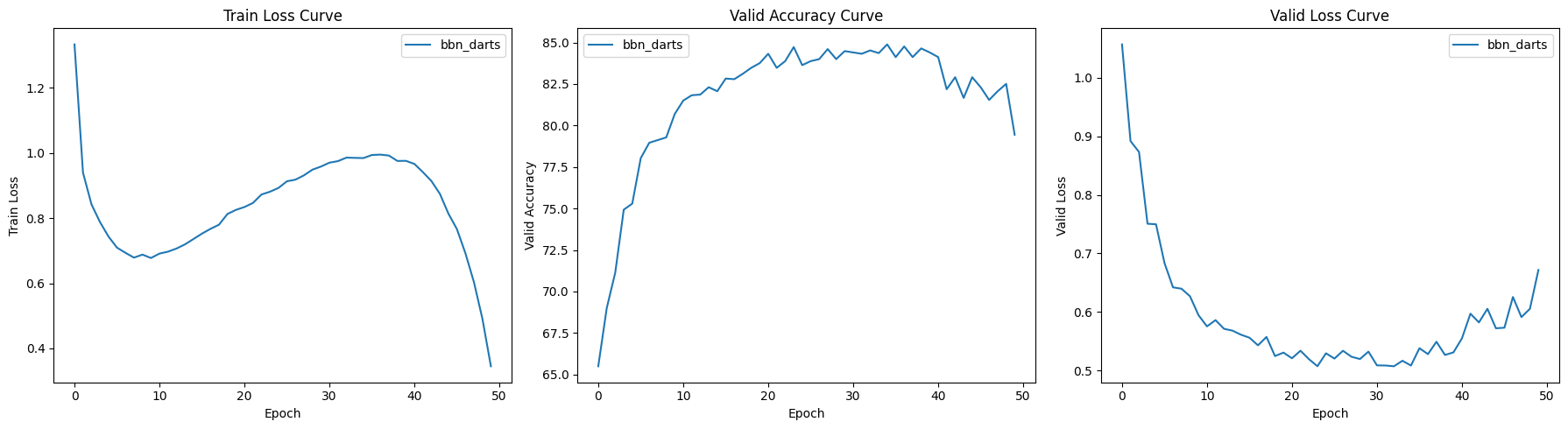}
    \caption{Training curve of simply combining BBN and DARTS. During training, the loss initially decreased, then experienced a strange increase, and subsequently decreased rapidly. However, contrary to this, when the training loss increased, the validation loss remained relatively unchanged; and when the training loss decreased, the validation performance started to decline. Typically, this would suggest overfitting, but judging from the training process, the class-sampling classifier head should actually be underfitting.}
    \label{fig:training-curve-simple}
\end{figure*}

It can be observed that the methods traditionally believed to improve learning outcomes actually lead to a decline in performance. Among them, the performance of BBN+DARTS decreases drastically, far from the approximately 80\% accuracy reported on balanced datasets \cite{zhou2020bbn}.

From the Figure\ref{fig:training-curve-simple}, it can be seen that the training loss exhibits a $\sim$ shape. It is not difficult to deduce from the mixing ratio curve that the training loss has anomalously increased due to the network gradually tilting towards the under-trained Class-sampling head. However, curiously, when the training loss rises, the validation accuracy and loss do not change significantly; instead, it is when the training loss rapidly decreases that the validation accuracy and loss begin to deteriorate. We conjecture that this is due to the difference between the mixing ratio during training and that during testing. Accordingly, we plotted the curve of the final model performance varying with the mixing ratio, and the results were as expected, illustrated in Figure \ref{fig:prelim-acc-mix}. This implies that the belief in BBN, which the two branches are equal, doesn't hold. In fact, the class-sampling head benefits the overall performance more.

\begin{figure*}[ht]
    \centering
    \includegraphics[scale=0.28]{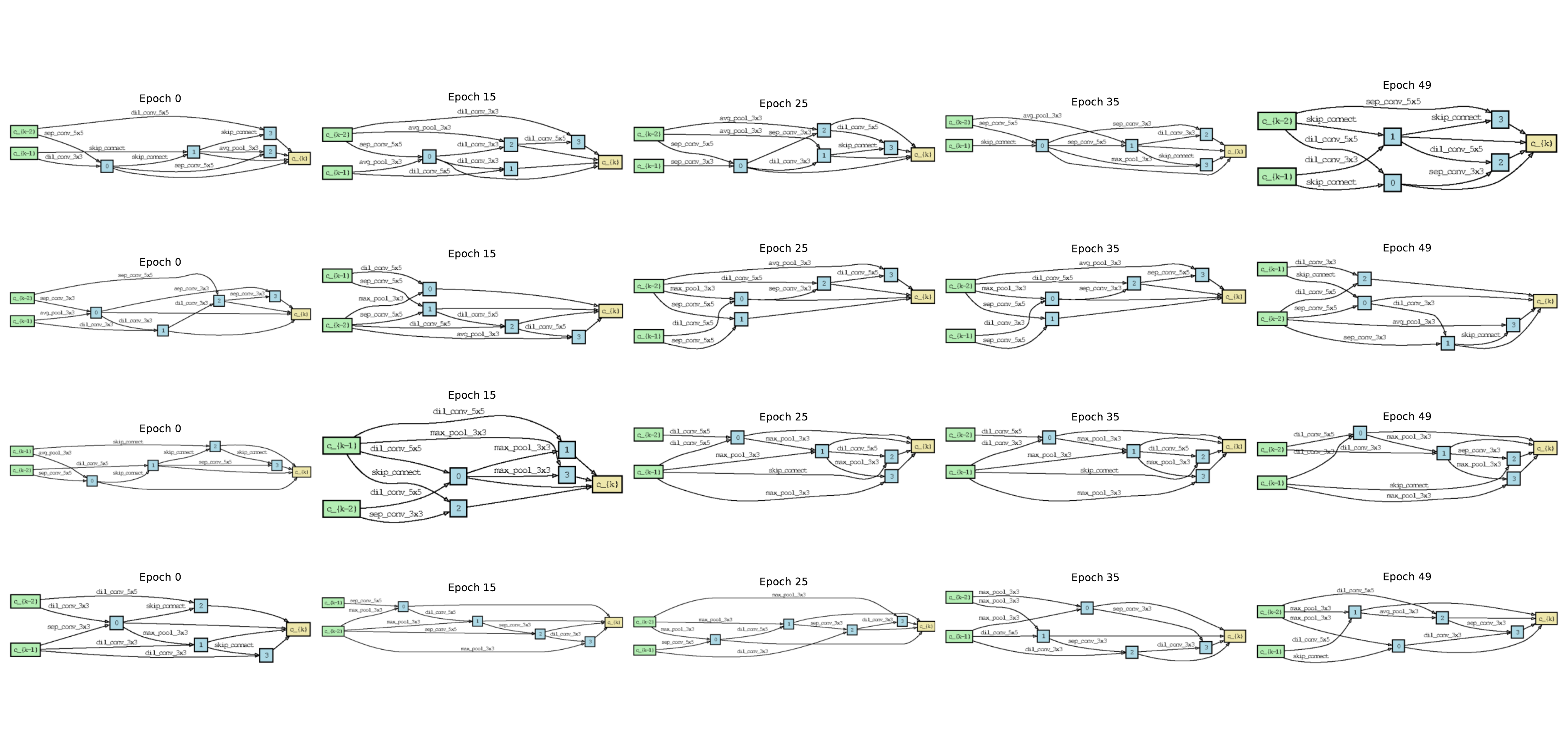}
    \caption{Visualization of the architecture as the epochs change. From top to bottom, they are: Backbone normal cell, Backbone reduction cell, instance-sampling head, and class-sampling head. It can be observed that the backbone does not stably converge even until the end. The instance-sampling head is consistent with mixing ratio, showing no updates in the later stages of training; the class-sampling head, on the other hand, exhibits the opposite behavior.}
    \label{fig:prelim-arch}
\end{figure*}

\begin{figure}[ht]
    \centering
    \includegraphics[scale=0.4]{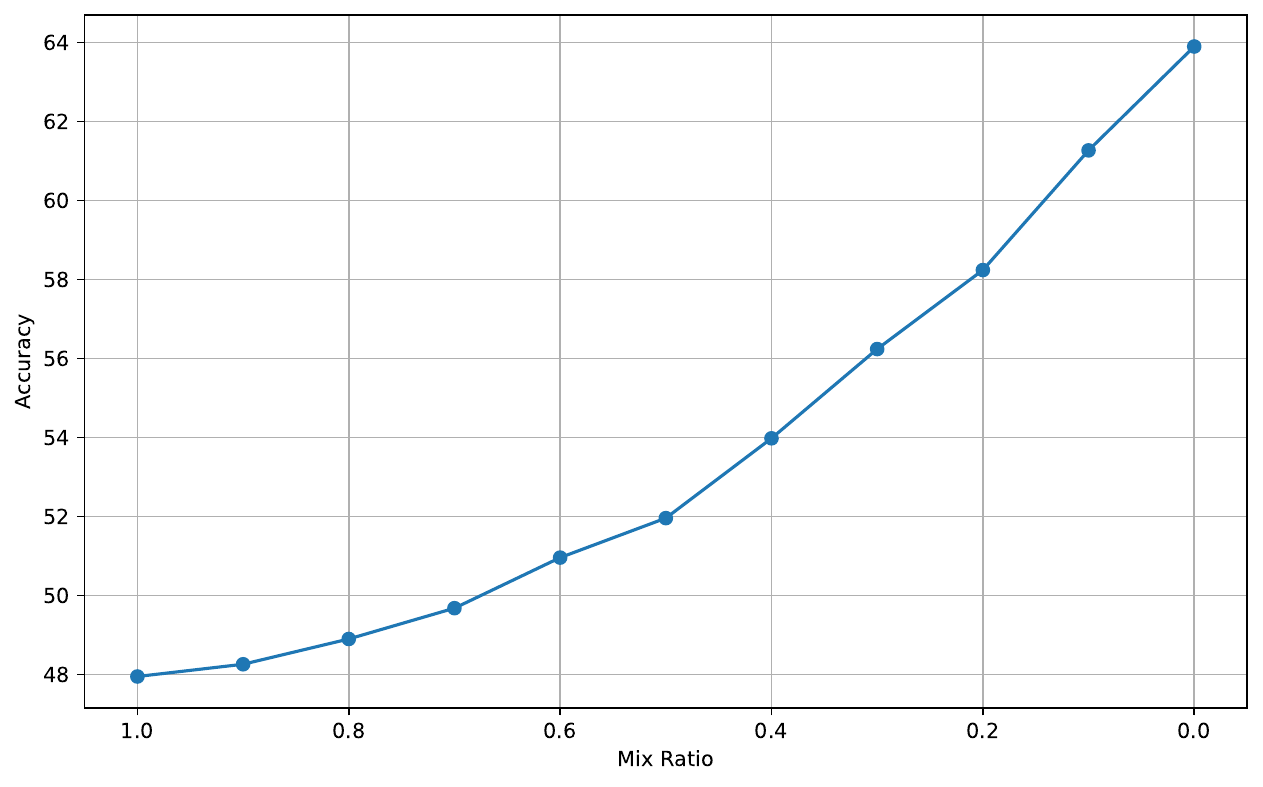}
    \caption{Accuracy versus mixing ratio}
    \label{fig:prelim-acc-mix}
\end{figure}

Another issue is that during training, the architecture parameters of the backbone continue to change rapidly in the later stages, which hinders the optimization process of the class-sampling head. As shown in Figure \ref{fig:prelim-arch}, even in the later stages, the backbone's architecture is still constantly changing, preventing the class-sampling head from being fully updated.

\section{Method}

In this section, we present our proposed approach to address the challenges of neural architecture search on long-tailed datasets. 

\subsection{Cells}

In DARTS, two types of cells are employed: normal cells and reduction cells. Normal cells use a stride of 1 to maintain the resolution, while reduction cells utilize a stride of 2 for downsampling. Each Cell consists of two fixed input nodes and one output node, so there are only $N-3$ actual internal nodes, where $N$ is the total number of nodes in the Cell.  The operation primitives for both cell types belong to the set $\mathcal{O}$.  In the BBN, there are three components in total. As each component serves a different purpose, we assign a separate set of structural parameters to each of them. For the backbone, similar to DARTS, we adopt a scheme of stacking cells; however, for the two classification heads, we only use a single normal cell. Thus, our architecture parameters are encoded as $(\alpha_{bb}, \alpha_{ins}, \alpha_{cls})$. By adopting this approach, we eliminate the need for additional architectural parameters for extra reduction cells, thus simplifying the optimization process.

\subsection{Heterogeneous LR scheduling}


This subsection provides a detailed description of our proposed heterogeneous learning rate (LR) scheduling strategy. 

Overall, this strategy aims to facilitate the representation learning of the backbone in the early stages of training while suppressing the interference of the backbone's architectural parameters on the fine-tuning process of the class-sampling classification heads in the later stages, striking a balance between the two. We can easily prove that, throughout the entire training process, the strength of the gradient update signal received by the backbone remains constant, while the gradient update signal for the class-sampling classification heads gradually increases as the mixing ratio shifts.

\begin{theorem}
\label{theo:gradient-signal}
During the training process, the magnitude of the gradients for the backbone remains constant, while the gradients for the classification heads vary with the mixing ratio.
\end{theorem}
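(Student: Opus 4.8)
The plan is to trace the single backward pass of Algorithm~\ref{alg:bbn-darts} and read off the scalar weight multiplying the upstream gradient along each path into the three components. First I would fix the per-step quantities: since the backbone is applied to the pair $\mathcal{B}=(x_{ins},x_{cls})$, write $h=(h_{ins},h_{cls})$ with $h_{ins}=f_{bb}(x_{ins})$, $h_{cls}=f_{bb}(x_{cls})$; let $z=\mu o_{ins}+(1-\mu)o_{cls}$ be the mixed logits, $\boldsymbol{p}=\mathrm{Softmax}(z)$, and recall $\mathcal{L}=\mu\,\mathbf{CE}(\boldsymbol{p},y_{ins})+(1-\mu)\,\mathbf{CE}(\boldsymbol{p},y_{cls})$. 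Using the standard softmax--cross-entropy identity I would obtain
\begin{equation}
\nabla_{z}\mathcal{L}=\mu\bigl(\boldsymbol{p}-\boldsymbol{e}_{y_{ins}}\bigr)+(1-\mu)\bigl(\boldsymbol{p}-\boldsymbol{e}_{y_{cls}}\bigr),
\end{equation}
a convex combination of two vectors of $\ell_1$-norm at most $2$, so $\|\nabla_{z}\mathcal{L}\|\le 2$ uniformly in $\mu$: the shape of the signal entering the network at the mixing point is bounded independently of the epoch $t$.

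Second, I would push $\nabla_{z}\mathcal{L}$ one layer down. Because $z$ is linear in $o_{ins}$ with coefficient $\mu$ and in $o_{cls}$ with coefficient $1-\mu$, we get $\nabla_{o_{ins}}\mathcal{L}=\mu\,\nabla_{z}\mathcal{L}$ and $\nabla_{o_{cls}}\mathcal{L}=(1-\mu)\,\nabla_{z}\mathcal{L}$, hence $\nabla_{w_{ins}}\mathcal{L}=\mu\,(\partial o_{ins}/\partial w_{ins})^{\!\top}\nabla_{z}\mathcal{L}$ and $\nabla_{w_{cls}}\mathcal{L}=(1-\mu)\,(\partial o_{cls}/\partial w_{cls})^{\!\top}\nabla_{z}\mathcal{L}$. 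With the schedule $\mu=1-(t/T)^2$ of Algorithm~\ref{alg:bbn-darts}, the scalar $1-\mu=(t/T)^2$ in front of the class-head gradient grows monotonically from $0$ to $1$, while the one in front of the instance head decays to $0$; this gives the second half of the theorem. Since the $\alpha$-gradient in the one-step rule (\ref{eq:darts-update}) factors through the same backward pass, the identical scalings govern $\nabla_{\alpha_{ins}}$ and $\nabla_{\alpha_{cls}}$, matching the empirical fact that the instance head stops updating late while the class head only starts then (Fig.~\ref{fig:prelim-arch}).

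Third --- the crux --- I would handle the backbone, which, unlike the heads, is reached along \emph{two} paths. Chain rule through $h_{ins}$ and $h_{cls}$ yields
\begin{equation}
\nabla_{w_{bb}}\mathcal{L}=\mu\,G_{ins}+(1-\mu)\,G_{cls},\qquad G_{\bullet}=\Bigl(\tfrac{\partial h_{\bullet}}{\partial w_{bb}}\Bigr)^{\!\top}\Bigl(\tfrac{\partial o_{\bullet}}{\partial h_{\bullet}}\Bigr)^{\!\top}\nabla_{z}\mathcal{L},
\end{equation}
so the backbone update is a convex combination whose weights $\mu$ and $1-\mu$ \emph{always sum to $1$}, of two per-branch gradients of the same $O(1)$ order as in the first step (the backbone being the identical map on two overlapping mini-batches and the two heads architecturally identical, so $G_{ins}$ and $G_{cls}$ are comparable and, in expectation over $\mathcal{D},\mathcal{D}'$, nearly aligned). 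Hence $\|\nabla_{w_{bb}}\mathcal{L}\|$ does not degenerate as $t\to T$, whereas each head's gradient norm sweeps an interval ending at $0$; the same asymmetry transfers to $\nabla_{\alpha_{bb}}$ via (\ref{eq:darts-update}), which is exactly what lets the backbone architecture keep drifting to the end of training.

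The main obstacle I anticipate is making ``remains constant'' fully rigorous in the third step: $\nabla_{w_{bb}}\mathcal{L}$ is \emph{exactly} $\mu$-independent only when $G_{ins}=G_{cls}$, which fails once the two branches have seen different data and their heads have diverged. So the honest core of the argument is the algebraic statement that the two backbone path-weights are $\mu$ and $1-\mu$ and therefore sum to $1$ regardless of $t$, in contrast to the single, $t$-varying path-weight of each head; ``constant magnitude'' is then the first-order/expectation consequence of branch symmetry at initialization, which I would state as the explicit hypothesis rather than gloss over. As an assumption-free fallback I would record the two-sided bound: $\nabla_{w_{bb}}\mathcal{L}$ lies in the convex hull of $\{G_{ins},G_{cls}\}$ for every $t$, so its norm is sandwiched between $\min$ and $\max$ of the two branch norms uniformly in $t$, while $\|\nabla_{w_{cls}}\mathcal{L}\|$ ranges over the entire interval $[\,0,\ \|(\partial o_{cls}/\partial w_{cls})^{\!\top}\nabla_{z}\mathcal{L}\|\,]$ as $t$ runs from $0$ to $T$.
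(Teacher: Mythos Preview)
Your proposal is correct and follows the same backbone-versus-heads chain-rule dichotomy as the paper, but your execution is considerably more careful than the paper's own three-line argument. The paper simply writes $\mathcal{L}=\mu\,\mathcal{L}(\alpha_{bb},\alpha_{ins})+(1-\mu)\,\mathcal{L}(\alpha_{bb},\alpha_{cls})$, differentiates, and then silently replaces both $\nabla_{\alpha_{bb}}\mathcal{L}(\alpha_{bb},\alpha_{ins})$ and $\nabla_{\alpha_{bb}}\mathcal{L}(\alpha_{bb},\alpha_{cls})$ by a common symbol $\nabla_{\alpha_{bb}}\mathcal{L}(\alpha_{bb})$ so that the $\mu$ and $1-\mu$ collapse exactly. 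You instead compute the shared upstream signal $\nabla_{z}\mathcal{L}$ explicitly, arrive at the convex combination $\mu\,G_{ins}+(1-\mu)\,G_{cls}$, and then correctly flag that exact $\mu$-independence requires $G_{ins}=G_{cls}$---precisely the step the paper takes for granted. What the paper's version buys is brevity; what yours buys is honesty about the hidden hypothesis and an assumption-free fallback (the convex-hull sandwich) that still separates the backbone from the heads once the two branches have diverged. Either way the operative algebraic fact is the same: the backbone sees two path weights that sum to one for every $t$, while each head sees a single $t$-varying weight.
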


\begin{proof}

Denote loss as $\mathcal{L}(\alpha_{bb}, \alpha_{ins}, \alpha_{cls})$. For classification head, we have 
\begin{equation}
    \mathcal{L}(\alpha_{ins}, \alpha_{cls}) = \mu \mathcal{L}(\alpha_{ins}) + (1-\mu)\mathcal{L}(\alpha_{cls})
\end{equation}

Hence, the gradient is affected by mixing ratio $\mu$:
\begin{gather}
    \nabla_{\alpha_{ins}}\mathcal{L} = \mu\nabla_{\alpha_{ins}}\mathcal{L}(\alpha_{ins}) \\
    \nabla_{\alpha_{cls}}\mathcal{L} = (1-\mu)\nabla_{\alpha_{cls}}\mathcal{L}(\alpha_{cls}) 
\end{gather}

However, for backbone parameter, because $\alpha_{ins}$ and $\alpha_{cls}$ are irrelevant, we have:
\begin{align}
 \nonumber
    \nabla_{\alpha_{bb}}\mathcal{L} &= \nabla_{\alpha_{bb}}(\mu \mathcal{L}(\alpha_{bb},\alpha_{ins}) + (1-\mu)\mathcal{L}(\alpha_{bb},\alpha_{cls}))\\ \nonumber
    & = \mu \nabla_{\alpha_{bb}}\mathcal{L}(\alpha_{bb}) + (1-\mu)\nabla_{\alpha_{bb}}\mathcal{L}(\alpha_{bb})\\
    & = \nabla_{\alpha_{bb}}\mathcal{L}(\alpha_{bb})
\end{align}

\textbf{Remark}: This formula appears to merely rearrange symbols, but it has significant practical implications. In the context of the architecture (Figure \ref{fig:bbn}), the gradients fed back from the two classification heads converge again in the backbone due to weight sharing, resulting in a consistent gradient magnitude received by the backbone during training.

\end{proof}

This contradicts the usual understanding that the backbone aids in training a representation. Even in the later stages of training, when the gradients are primarily dominated by the class-sampling head, the backbone still receives updates similar to those at the initial stages of training. This results in destructive updates to the weights, disrupting the learned representations, and is also detrimental to the training of the class-sampling head, as part of the updates is transferred to the backbone.

We noticed that the mixing ratio effectively serves as a learning rate adjustment to some extent, but only for the two classification heads. Therefore, we apply a negative exponential signal to the learning rate of the backbone, causing the learning rate of its architecture parameters to decrease as the epochs progress. Specifically, at epoch i, we transform the learning rate as follows:
\begin{equation}
    \label{eq:lr-scale}
    \xi_{i} = \xi_{0}\cdot (1-e^{-\tau \mu})
\end{equation}
where $\mu$ is the mixing ratio at epoch $i$. 

\begin{figure}[ht]
    \centering
    \includegraphics[scale=0.4]{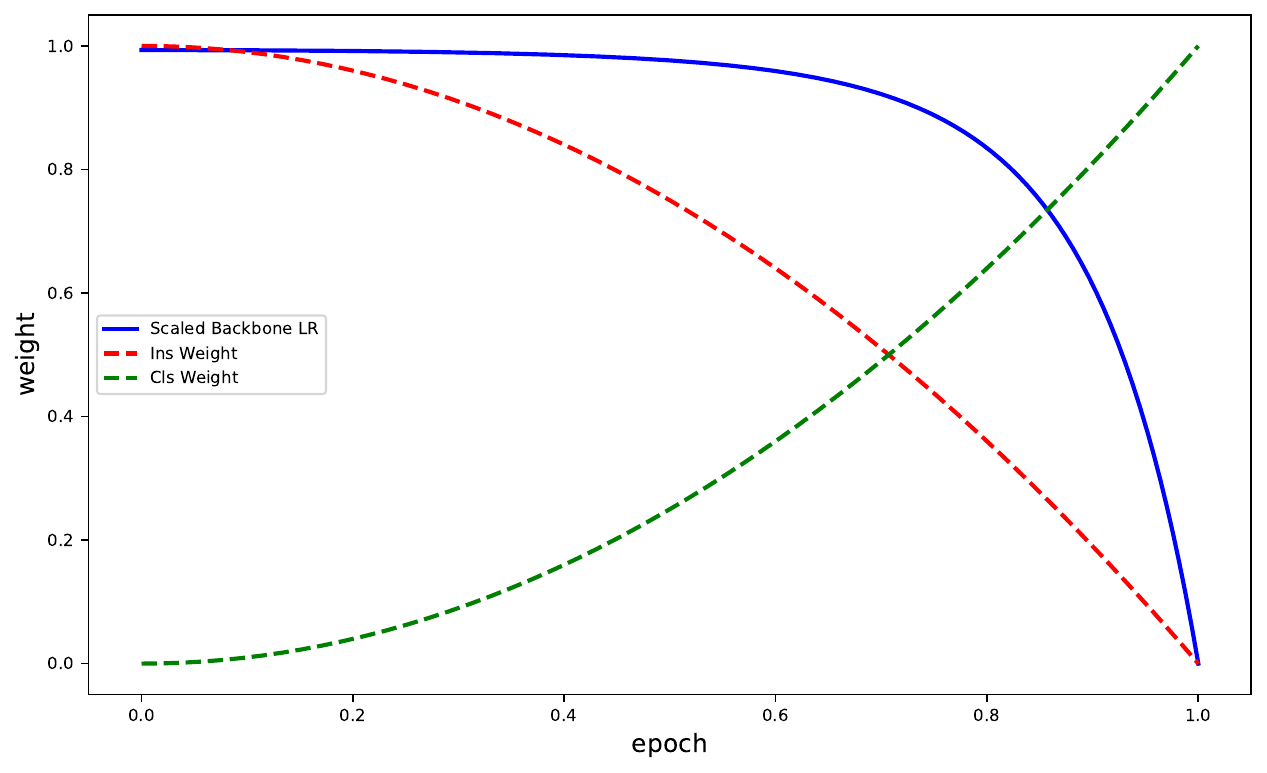}
    \caption{Weight versus normalized epoch. Ins Weight is identical to the mixing ratio $\mu$. }
    \label{fig:decay}
\end{figure}

The image of this transformation factor is shown in the Figure \ref{fig:decay}. It can be seen that for a while initially, there is almost no impact on the learning rate. However, as the mixing ratio gradually approaches 0, the scaling factor quickly drops to 0. The scaling factor decreases faster than the mixing ratio approaches 0, thus enabling the "soft freezing" of the backbone's parameters ahead of time.

\subsection{Mixing ratio}

Based on previous observations, in the context of combining with the DARTS algorithm, the two branches are not equivalent in the final inference stage. Experimental results show that the instance-sampling branch alone can achieve a moderate level of performance, while the class-sampling branch can achieve higher performance, which is consistent with common understanding. In the original design of BBN, the mixing ratio is adjusted using a quadratic descent strategy. This function is not strictly symmetric about $x=0.5$, leading to a slightly longer emphasis on the instance-sampling branch, resulting in insufficient training of the class-sampling branch. This coincides the strange training curve shown in Figure \ref{fig:training-curve-simple}. We attempted to use a new function, the reverse sigmoid function, to regulate the mixing ratio. Our aim is to train the network fairly on both branches.

\begin{equation}
    RS(x) = 1-\text{Sigmoid}(\frac{t-T/2}{T/2}*k)
\end{equation}

where $k$ is an adjustment factor. The function is (centrally) symmetric with respect to 0.5, initially spending a prolonged period "soaking" in the instance-sampling head, and then rapidly switching to the class-sampling head.

\section{Experiment}
\subsection{Settings}

\textbf{Dataset} CIFAR10\cite{krizhevsky2009learning} is a widely used dataset in the field of machine learning and computer vision. It consists of 60,000 32x32 color images divided into 10 different classes, such as airplanes, cars, birds, cats, etc. Each class contains 6,000 images. The dataset is split into 50,000 training images and 10,000 testing images. We employed a commonly used method for constructing imbalanced datasets, which involves sampling from each class at a proportionally exponential rate. The degree of imbalance is illustrated by the ratio of the largest class to the smallest class, and in our case, the imbalance ratio is 100. The training set was further divided into a training set and a validation set at a ratio of 80/20.

\textbf{Implementation} The experiment was conducted using PyTorch\footnote{https://pytorch.org/}, with partial reuse of the DARTS code\footnote{https://github.com/quark0/darts}. It was executed on a single A6000 GPU, with each training run taking approximately 8 hours, and peak memory usage reaching up to 35GB. We applied 4-bit padding to the input and performed normalization. To accomplish the heterogeneous LR scheduling for various components, we completely rewrote the architecture section of DARTS and introduced a new component called ArchOptimizer, which is responsible for controlling the details of gradient backpropagation in the DARTS algorithm.

\textbf{Hyperparameters} We trained using a small 8-layer backbone network and a network composed of two single cells. The batch size was set to 128, which effectively became 256 due to the simultaneous sampling from two data sources. The weight learning rate was set to 0.02 and used cosine annealing; the architecture learning rate also started at 0.02 and was scheduled according to the strategy previously described. We employed a weight decay of 3e-4 to regulate parameters. We set maximum epoch $T=50$.

Due to time and cost constraints, we only trained and validated on a small 8-layer network, and did not follow the common practice of fully training on a larger network (12 layers or more) with the cell structure obtained for over 200 epochs. This resulted in a general decrease in accuracy, from around 75\% to around 60\%. In the future, when time and cost permit, we will consider conducting a thorough and complete experiment, at which point we can compare our results with other methods.

\subsection{Main Results}

The overall accuracy metrics are as follows(HLS refers to Heterogeneous LR Scheduling):

{
\linespread{1.5}
\begin{table}[!ht]
\caption{Accuracy of Methods on Long-Tailed CIFAR-10}
\label{tab:main-results}
\centering
\begin{tabular}{ll}
\hline
Method            & Accuracy \\
\hline
DARTS             & 64.56    \\
DARTS+Re-sampling & 61.20    \\
DARTS+BBN         & 52.14    \\
HLS              & \textbf{65.12}  \\
HLS + Reverse Sigmoid & 61.85 \\ 
HLS + Continous Learning & 63.12 \\ \hline
\end{tabular}
\end{table}
}

As seen in Table \ref{tab:main-results}, the best results were achieved using only the HLS method. Employing a reverse sigmoid mixing ratio or further fine-tuning \cite{cao2019learning} actually led to a decrease in performance. All these results were reported based on the best mixing ratio, and we will provide more detailed information in the subsequent sections.

\subsection{Ablation: Backbone LR scaling}

\begin{figure}
    \centering    \includegraphics[scale=0.35]{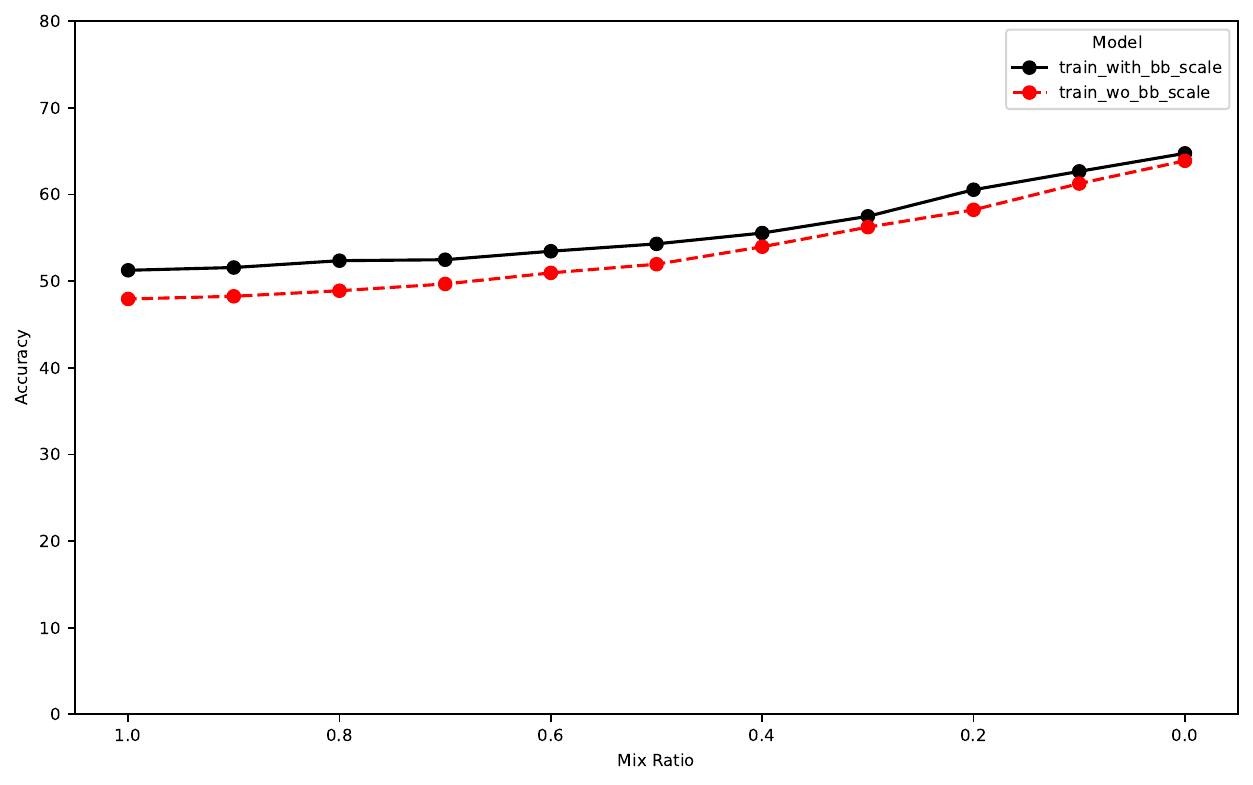}
    \caption{Accuracy versus Mixing Ratio wether using scaled backbone 
 lr or not. Scaled LR obtains a significant improvement.}
    \label{fig:scaled-lr-acc}
\end{figure}

As observed in Figure \ref{fig:scaled-lr-acc}, the method using scaled backbone lr outperformed the method without scaled backbone lr across all mixing ratios. This indicates that when training with BBN+DARTS, it is necessary to gradually reduce the update magnitude of the Backbone, which aligns with our analysis.

\begin{figure*}[t]
    \centering
    \includegraphics[scale=0.28]{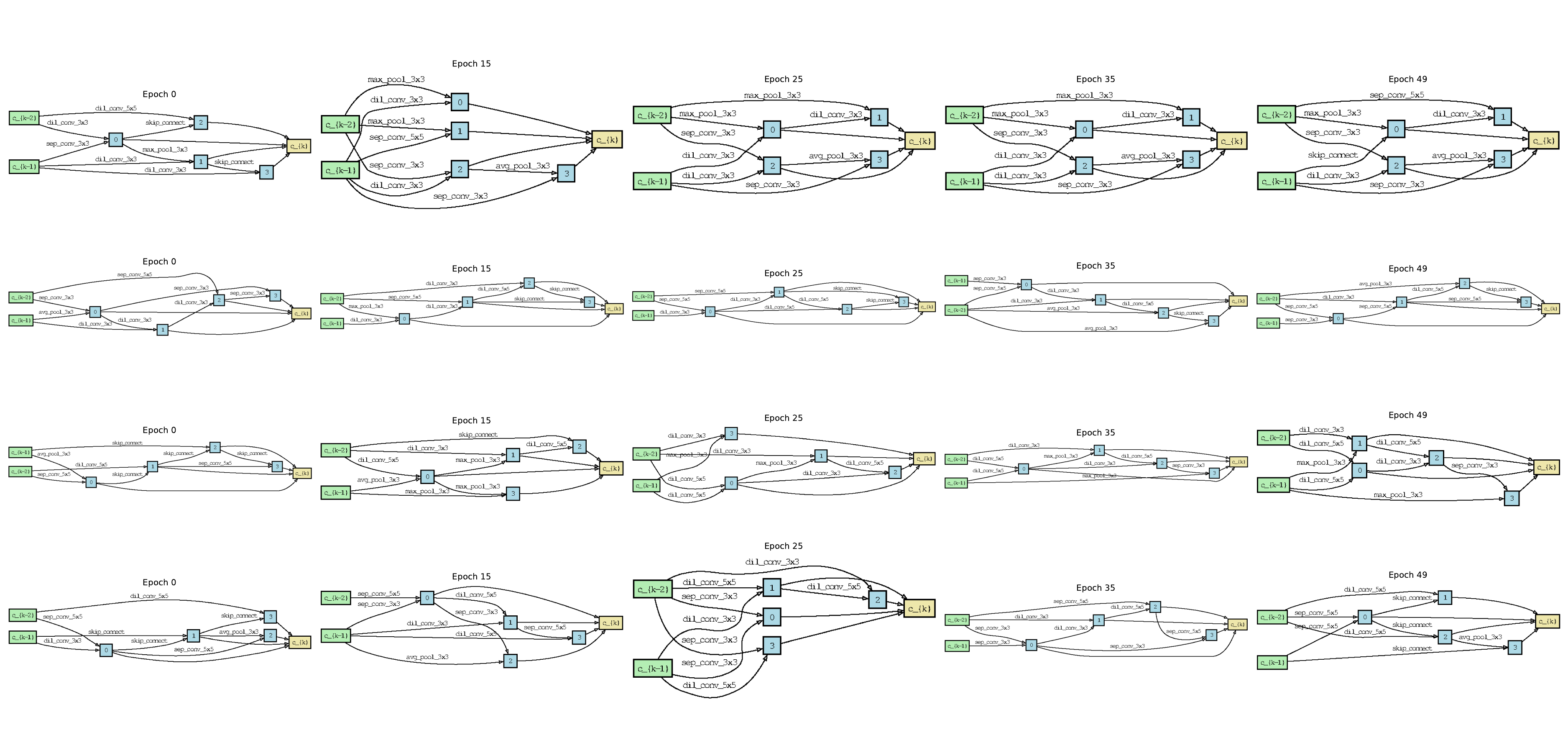}
    \caption{Visualization of the architecture as the epochs change. From top to bottom, they are: Backbone normal cell, Backbone reduction cell, instance-sampling head, and class-sampling head.}
    \label{fig:exp-arch}
\end{figure*}

To strengthen our assertion, we also examined the architectures generated during training, as shown in Figure \ref{fig:exp-arch}. Comparing it with Figure \ref{fig:prelim-arch}, we can see that the structure of the backbone using scaled lr scheduling stabilized in the later stages of training, avoiding interference with the training of the class-sampling head. This is consistent with our expectations.

\subsection{Continous Training}

\begin{figure}[h]
    \centering
    \includegraphics[scale=0.35]{ 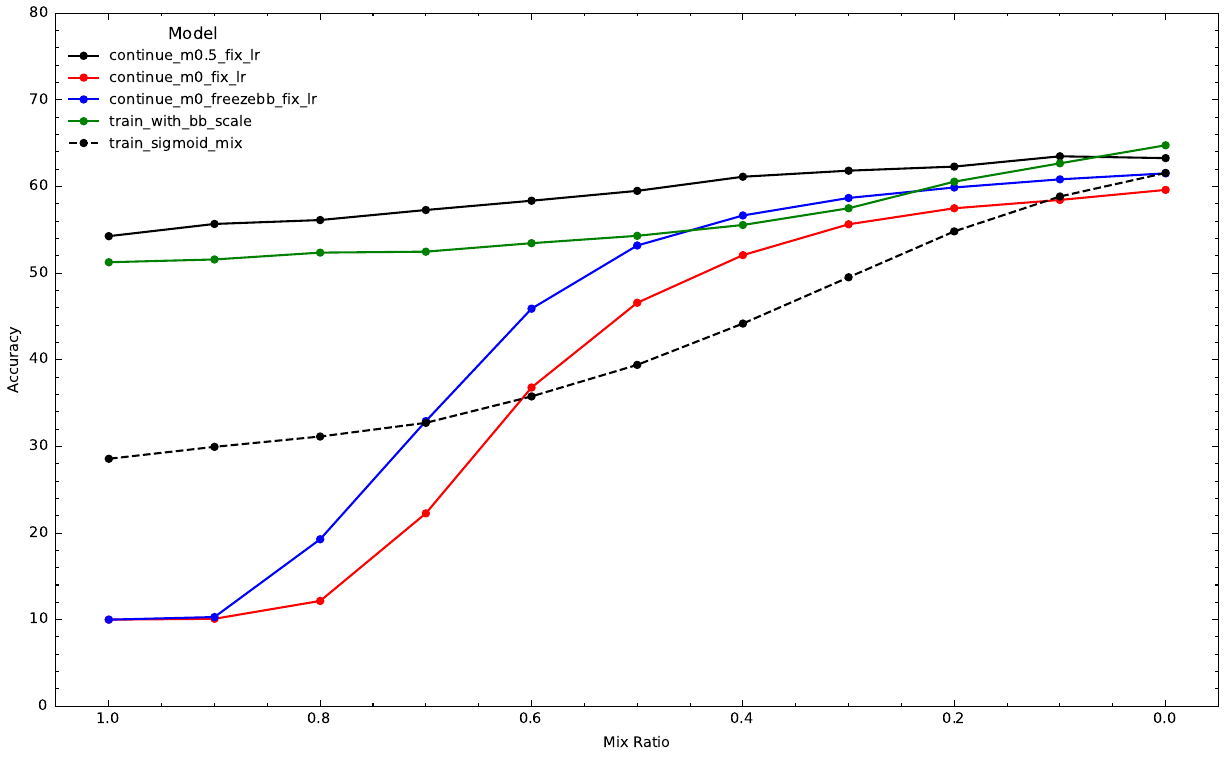}
    \caption{Accuracy versus Mixing Ratio with different training strategy}
    \label{fig:con}
\end{figure}

Given that the performance is consistently best when the mixing ratio is set to 0 during testing, i.e., using only the class-sampling head, it naturally raises the question: Could we benefit from allowing the class-sampling head to train for a longer period, or by training with a lower mixing ratio for an extended duration? To explore this, we conducted tests. We evaluated the following scenarios: 1. Training with $\mu=0$, only for class-sampling branch. 2. Training with a mixing ratio of 0.5. 3. Employing a reverse sigmoid during the training process. 4. Training with the backbone frozen and solely focusing on the class-sampling head. The results are as shown in \ref{fig:con}.

We observed that training with a mixing ratio of 0.5 yielded the most stable performance; however, the peak metrics were slightly lower than those achieved without extended training. The other extended training methods suffered from catastrophic forgetting, where the instance-sampling branch became completely ineffective, indicating that the backbone parameters were severely compromised (as the instance-sampling classifier was not updated).

\section{Conclusion}

In this paper, we studied the problems which DARTS encounters in imbalanced
dataset. Specifically, we tried to utilize the bilateral-branch
network(BBN) to mitigate the performance degrade.

Applying DARTS directly to cifar-10 with imbalance ratio 100
results in approximately 65\% accuracy.
While using re-sampling strategy leads to a significant performance
degradation of 60\%.
The involvement of naive BBN makes a worse performance, and
with our heterogeneous learning rate scheduling and mix ratio
strategy, we achieve an approximately 65\% accuracy too.

This suggests that the re-sampling strategy would inherently affect
the performance of the DARTS algorithm, and we hope that this
work can provide new insights for future researchers.

\bibliographystyle{IEEEtran}
\bibliography{references}

\end{document}